\documentclass{article}

\usepackage[preprint]{neurips_2026}

\usepackage[utf8]{inputenc}
\usepackage[T1]{fontenc}
\usepackage{hyperref}
\usepackage{url}
\usepackage{booktabs}
\usepackage{amsfonts}
\usepackage{amsmath}
\usepackage{amssymb}
\usepackage{amsthm}
\usepackage{nicefrac}
\usepackage{microtype}
\usepackage{xcolor}
\usepackage{graphicx}
\graphicspath{{figures/}}

\usepackage{hyperref}       
\hypersetup{
	colorlinks   = true, 
	urlcolor     = blue, 
	linkcolor    = blue, 
	citecolor   = blue 
}

\newcommand{\Pp}{\mathbb{P}}
\newcommand{\E}{\mathbb{E}}

\newcommand{\ind}{\mathbf{1}}

\theoremstyle{plain}
\newtheorem{theorem}{Theorem}
\newtheorem{proposition}{Proposition}
\newtheorem{lemma}{Lemma}

\theoremstyle{definition}

\newtheorem{assumption}{Assumption}

\theoremstyle{remark}
\newtheorem{remark}{Remark}

\title{Conformal-Style Quantile Analyses for Stochastic Bandits}

\author{%
  Chengyu Du\\
  University of Massachusetts Amherst \\
  \texttt{chengyudu@umass.edu} \\
  \And
  Mengfan Xu\\
  University of Massachusetts Amherst \\
  \texttt{mengfanxu@umass.edu} \\
}

\begin{document}

\maketitle

\begin{abstract}
Stochastic bandit algorithms are usually analyzed under a mean-reward
criterion, yet many problems favor arms with strong upper-tail performance,
which we study herein. For a fixed miscoverage level \(\alpha\), the natural
upper-tail target of arm \(j\) is the upper endpoint
\(F_j^{-1}(1-\alpha/2)\) of a central prediction interval. This target can
rank arms differently from their means, creating a central  mismatch with the
classical bandit objective. To this end, we propose ACP-UCB1, a
conformal-style policy that combines an adaptive conformal estimate of the
upper endpoint with a UCB-type optimism bonus. The technical challenge is that the conformity scores used by ACP-UCB1 are recomputed from evolving empirical quantile estimates and evaluated at an adaptive level. We control this endpoint through reward-quantile concentration, a perturbation argument for recomputed score quantiles, and deterministic localization of the adaptive level. ACP-UCB1 achieves logarithmic upper-quantile regret with per-arm
contribution \(O(\nicefrac{\log n}{\Delta_j^{\mathrm{ACP}}})\). We also
provide metric-specific regret decompositions comparing ACP-UCB1 with UCB1 and
use numerical experiments to validate performance and improvement.\end{abstract}

\section{Introduction}
\label{sec:intro}

Multi-armed bandits (MAB) form a classical framework for sequential decision making under partial feedback.  In the stochastic formulation, a learner interacts with a finite collection of arms, pulls one arm at each round, and observes only the reward generated by that arm. The objective is usually stated in terms of regret relative to the arm with the largest expected reward. This mean-reward criterion is the foundation of the finite-time analysis of UCB1 and
many of its refinements, including KL-UCB, variance-adaptive UCB, Thompson sampling, and the modern theory of stochastic bandits \citep{lai1985asymptotically,auer2002finite,thompson1933likelihood,garivier2011klucb,audibert2009exploration,lattimore2020bandit}. The implicit modeling assumption in this line of work is that the arm with the best long-run average reward is also the desired arm for the decision maker.  Yet several bandit objectives depend on distributional features beyond the mean, including risk-sensitive and quantile-based criteria \citep{sani2012risk,galichet2013exploration,szorenyi2015qualitative,david2016pure}. This distinction is especially relevant when decisions are made according to predictive upper-tail performance. In such settings, two arms may have similar means but very different upper predictive behavior, and the mean-optimal arm may not be the arm preferred by an upper-tail decision rule. More generally, an arm may have a moderate mean but still produce substantially larger high-end outcomes than another arm with a slightly larger mean.

In fact, non-mean objectives have been studied in several bandit settings. Risk-aware bandits replace expected reward by distributional criteria such as mean-variance tradeoffs, CVaR, or other risk-sensitive functionals, while
quantile bandits study learning and identification when a prescribed reward quantile is the object of interest
\citep{sani2012risk,szorenyi2015qualitative,nikolakakis2021quantile}. These works establish an important point for the present paper: changing the reward functional changes both the identity of the optimal arm and the
statistical analysis required to learn it. Our objective is not specified directly as a fixed risk functional of the reward distribution. It is induced by a score-calibrated predictive construction: starting from a central reward interval, we score deviations from that interval and add a score-quantile correction to its upper endpoint.  

The methodological distinction we exploit comes from conformal prediction. Rather than estimating uncertainty around an unknown parameter, conformal methods calibrate prediction sets for future observations through scores
\citep{vovk2005algorithmic,lei2018distribution,angelopoulos2023gentle}. A conformal construction begins with a baseline predictive rule, assigns a score to each candidate observation, and calibrates the rule by taking a score quantile. Conformalized quantile regression is a representative example: a preliminary lower--upper quantile interval is corrected by a calibrated score to form a prediction interval \citep{romano2019conformalized}. Adaptive and online conformal methods further study how calibration levels or score thresholds can be updated sequentially or under departures from exchangeability \citep{gibbs2021adaptive,zaffran2022adaptive,angelopoulos2024online,barber2023conformal}.

However, the bandit setting changes the role of calibration. In standard conformal prediction, the calibration sample is fixed independently of the prediction target, or the full label is observed after prediction. In a bandit problem, observations are arm-local, adaptively selected, and only available for pulled arms. Thus the empirical score distribution used to calibrate an endpoint is itself produced by the sampling policy. This creates a different statistical object: the endpoint must be accurate enough for arm selection, and its error must be controlled uniformly over the random arm histories generated by the policy.

Recent work has begun to study conformal prediction under partial feedback and in bandit-type decision problems. In stochastic online conformal prediction with semi-bandit feedback, the learner constructs prediction sets while observing the true label only when it is contained in the set; the resulting regret is measured relative to an optimal conformal predictor \citep{ge2025stochastic}. Conformal Bandits combine conformal prediction with bandit policies to study statistical validity and reward efficiency, with particular emphasis on small-gap regimes \citep{cuonzo2025conformal}. These works address conformal calibration under sequential or partial-feedback structures, but their statistical object differs from the one considered here. We formulate a stochastic reward-arm problem in which the value of an arm is induced by a score-calibrated predictive endpoint. For each arm, a central reward interval defines a conformity score; a score quantile then corrects the interval, and the corrected upper endpoint is used as the arm value. This formulation separates our objective from existing conformal-bandit problems. The conformal construction is not used only to certify coverage around a policy, nor is the bandit objective specified as a fixed risk functional in advance. Instead, the score-calibrated endpoint itself defines the population arm value, and the learning problem is to identify this value under arm-local partial feedback.

To this end, we propose ACP-UCB1 to learn this endpoint under bandit feedback. For each arm, the algorithm estimates the upper reward anchor of a central interval, adds an empirical score-quantile correction computed from recomputed arm-local scores, and then adds a UCB-type exploration bonus. Equivalently, after \(s\) pulls of arm \(j\), the selection index has three parts: an upper reward-anchor estimate, a conformal correction, and an optimism bonus of order \(\sqrt{\log t/s}\). 

Under i.i.d. rewards and local density regularity around the relevant reward and score quantiles, our main theoretical result shows that ACP-UCB1 achieves logarithmic regret  with order \(O(\nicefrac{\log n}{\Delta_j^{\mathrm{ACP}}})\) for the per arm. We also give a formal comparison witha mean-based UCB benchmark.  In a Gaussian model, the population ACP endpoint at nominal level \(\alpha\) is \(\mu_j+z_{1-\alpha/2}\sigma_j\). Thus the mean objective and the ACP upper-tail objective are two endpoints of the utility path \(v_j(\kappa)=\mu_j+\kappa\sigma_j\). When the two objectives share the same optimizer, the comparison is governed by the logarithmic pull-count constants; ACP-UCB1 has the smaller leading coefficient under both endpoint metrics when its ACP gaps are large enough that \(\gamma_j^{\rm ACP}<\gamma_j^{\rm UCB}\) for the relevant suboptimal arms. When the objectives select different arms, the mismatch is more severe: concentrating on one objective yields a linear leading term under the other. The comparison clarifies ACP-UCB1 as a policy for the conformal upper-tail objective rather than as a proxy for mean-regret minimization.

The experiments are designed to test the metric dependence predicted by the analysis. On Gaussian instances with different mean-optimal and ACP-optimal arms, the two algorithms exhibit opposite regret profiles: ACP-UCB1 performs better under the ACP upper-tail metric, while mean-based UCB performs better under mean regret. A sweep over \(\alpha\) confirms that smaller miscoverage levels amplify the role of reward variability in the upper-tail objective. Experiments beyond Gaussian rewards show the same qualitative pattern for heavier-tailed and skewed reward families. The empirical results therefore illustrate the central message of the paper: changing the arm value changes both the target arm and the appropriate bandit policy.

The contributions of this paper are summarized as follows. First, we formulate a stochastic upper-tail bandit problem in which conformal score calibration defines the arm value: a central reward interval is corrected by a score quantile, and the resulting upper endpoint is optimized by the learner. Second, we introduce ACP-UCB1, a conformal-style UCB policy whose index combines an empirical upper reward anchor, a score-quantile correction, and a UCB-type exploration bonus. Third, we prove a  logarithmic pseudo-regret bound for the ACP upper-tail objective. Fourth, we provide a Gaussian comparison with mean-based UCB, separating the common-optimizer regime, where the comparison reduces to logarithmic constants, from the objective-mismatch regime, where evaluating a policy under the other metric produces a linear leading term. Finally, we present numerical experiments that illustrate the metric-specific behavior predicted by the theory across Gaussian, heavy-tailed, and skewed reward families.

\section{Problem Formulation}
\label{sec:setup}

In this section, we formally introduce the upper-tail bandit problem induced by the score-calibrated endpoint and notations used throughout the paper. Specifically, the formulation specifies (i) the population arm value and (ii) the regret criterion to be minimized.

\subsection{Stochastic Bandit Model}
\label{subsec:bandit}

Throughout, we consider a stochastic \(K\)-armed bandit with arm set \(\mathcal A=\{1,\ldots,K\}\). At each round \(t\geq 1\), the learner selects an arm \(I_t\in\mathcal A\) and subsequently observes a reward \(Y_t\in\mathbb R\). For arm \(j\), we let \(\tau_{j,s}\) denote the global time of its \(s\)-th pull and write \(Y_{j,s}=Y_{\tau_{j,s}}\). The number of pulls of arm \(j\) up to time \(t\) is denoted by \(T_j(t)=\sum_{r=1}^t\ind\{I_r=j\}\). We assume that, for each arm \(j\), the local reward sequence \(Y_{j,1},Y_{j,2},\ldots\) is i.i.d. from a continuous distribution \(F_j\) with density \(f_j\).

\subsection{Score-Based Conformal Correction}
\label{subsec:population_conformal}

We first describe the population quantity that defines the bandit objective. Fix a target miscoverage level \(\alpha\in(0,1)\), and set \(\tau_{\rm low}=\alpha/2\) and \(\tau_{\rm high}=1-\alpha/2\). For arm \(j\), let \(q_j^{\rm low}=F_j^{-1}(\tau_{\rm low})\) and \(q_j^{\rm high}=F_j^{-1}(\tau_{\rm high})\). The interval \([q_j^{\rm low},q_j^{\rm high}]\) is the nominal central interval of the reward distribution. For endpoints \(\ell\le u\), we use the score
\[
s(y;\ell,u)=\max\{\ell-y,\,y-u\},
\]
so that \(s(y;\ell,u)\le z\) is equivalent to \(y\in[\ell-z,u+z]\). Thus the score records how much the central interval must be expanded in order to cover \(y\). Using the base quantiles of arm \(j\), define \(S_j:=s(Y_j;q_j^{\rm low},q_j^{\rm high})
     =\max\{q_j^{\rm low}-Y_j,\,Y_j-q_j^{\rm high}\}, Y_j\sim F_j.\) Let \(G_j\) denote the distribution function of \(S_j\). Since the lower endpoint of the score support is \(z_{j,\min}:=\frac{q_j^{\rm low}-q_j^{\rm high}}{2},\) we have, for \(z\ge z_{j,\min}\),\(G_j(z):=\Pp(S_j\le z)
=F_j(q_j^{\rm high}+z)-F_j(q_j^{\rm low}-z),\) and \(G_j(z)=0\) for \(z<z_{j,\min}\). On the interior of the support of \(S_j\), \(G_j\) has density
\begin{equation}
g_j(z)=f_j(q_j^{\rm high}+z)+f_j(q_j^{\rm low}-z).
\label{eq:gj_def}
\end{equation}

For a calibration level \(\beta\in[\underline\alpha,\overline\alpha]\), let \(c_j(\beta)=G_j^{-1}(1-\beta)\). This gives the population conformal interval \(\mathcal C_j(\beta)=[q_j^{\rm low}-c_j(\beta),q_j^{\rm high}+c_j(\beta)]\), whose upper endpoint is \(V_j(\beta)=q_j^{\rm high}+c_j(\beta)\).

The arm value used in this paper is this upper endpoint, not the mean reward. At the nominal level \(\beta=\alpha\), the base interval already has probability mass \(1-\alpha\), hence \(G_j(0)=1-\alpha\). Under Assumption~\ref{assum:density}, the corresponding score quantile is unique and \(c_j(\alpha)=0\). We therefore write
\[
u_j^\star = V_j(\alpha)=q_j^{\rm high}=F_j^{-1}\!\left(1-\frac{\alpha}{2}\right)
\]
for the nominal upper-tail value of arm \(j\).

\subsection{Upper-Tail Pseudo-Regret}
\label{subsec:regret_objective_agreement}

The performance criterion associated with ACP-UCB1 is pseudo-regret under the population upper-tail values:
\[
R_n^{\rm ACP}
=
n\max_{j\in\mathcal A}u_j^\star
-
\E[\sum_{t=1}^n u_{I_t}^\star].
\]
If the upper-tail optimal arm \(A\) is unique, this becomes
\[
R_n^{\rm ACP}
=
\sum_{j\ne A}\Delta_j^{\rm ACP}\E[T_j(n)],
\qquad
\Delta_j^{\rm ACP}=u_A^\star-u_j^\star .
\]
For comparison, the classical mean objective uses \(M\in\arg\max_j\mu_j\) and gaps \(\Delta_j^\mu=\mu_M-\mu_j\). The cases \(A=M\) and \(A\ne M\) separate instances where the two objectives agree from those where upper-tail optimization and mean optimization lead to different target arms.

\section{Methodology}
\label{sec:policy}


The population endpoint \(V_j(\beta)\) defined above is not available to the learner, since both the reward quantiles and the score distribution are unknown. To address it, ACP-UCB1 replaces this term by an arm-local endpoint constructed from the rewards observed from each arm.  This section describes this empirical construction and the decision rule. We first define the empirical conformal endpoint based on recomputed scores and score-quantile correction in Section \ref{subsec:empirical_conformal}. Then we specify the localized adaptive update for the calibration level in Section \ref{subsec:adaptive_level_update}. Finally, we add a UCB-type exploration bonus to the corrected upper endpoint to build the ACP-UCB1 arm-selection rule in Section \ref{subsec:acp_ucb}.

\subsection{Empirical Conformal Endpoint}
\label{subsec:empirical_conformal}

We now describe the arm-local statistic available under bandit feedback.  For a sample \(x_{1:m}:=(x_1,\ldots,x_m)\), define the empirical quantile \(\widehat Q_m(\tau;x_{1:m})
:=
\inf\left\{
x\in\mathbb R:
\frac{1}{m}\sum_{r=1}^m \ind\{x_r\le x\}\ge \tau
\right\}.\)
For arm \(j\) and local time \(s\ge2\), let
\(Y_{j,1:s-1}:=(Y_{j,1},\ldots,Y_{j,s-1})\). The empirical base endpoints are
\begin{equation}
\hat q^{\rm low}_{j,s-1}
:=
\widehat Q_{s-1}(\tau_{\rm low};Y_{j,1:s-1}),
\qquad
\hat q^{\rm high}_{j,s-1}
:=
\widehat Q_{s-1}(\tau_{\rm high};Y_{j,1:s-1}).
\label{eq:emp_quantiles}
\end{equation}
These quantities estimate the population reward quantiles
\(q_j^{\rm low}\) and \(q_j^{\rm high}\). Given these empirical endpoints, all past observations of arm \(j\) are rescored relative to the same current interval:
\begin{equation}
\widetilde S^{(s)}_{j,r}
:=
s\!\left(Y_{j,r};\hat q^{\rm low}_{j,s-1},\hat q^{\rm high}_{j,s-1}\right)
=
\max\{\hat q^{\rm low}_{j,s-1}-Y_{j,r},
      Y_{j,r}-\hat q^{\rm high}_{j,s-1}\},
\qquad 1\le r<s .
\label{eq:recomputed_scores}
\end{equation}
The dependence among the scores in \eqref{eq:recomputed_scores} arises solely
from the common empirical endpoints.
For the arm-specific adaptive level \(\alpha_{j,s}\), define the empirical
score correction by
\begin{equation}
\widehat c^{\rm conf}_{j,s}
:=
\widehat Q_{s-1}
\left(
1-\alpha_{j,s};
\widetilde S^{(s)}_{j,1},\ldots,\widetilde S^{(s)}_{j,s-1}
\right).
\label{eq:conf_quantile}
\end{equation}
Equivalently, \(\widehat c^{\rm conf}_{j,s}
=
\inf\left\{z\in\mathbb R:
\frac{1}{s-1}\sum_{r=1}^{s-1}
\ind\{\widetilde S^{(s)}_{j,r}\le z\}
\ge 1-\alpha_{j,s}
\right\}.\)
The corresponding empirical conformal interval is
\begin{equation}
L_{j,s}
=
\hat q^{\rm low}_{j,s-1}-\widehat c^{\rm conf}_{j,s},
\qquad
U_{j,s}
=
\hat q^{\rm high}_{j,s-1}+\widehat c^{\rm conf}_{j,s}.
\label{eq:interval_def}
\end{equation}
The upper endpoint \(U_{j,s}\) is the exploitation statistic in the ACP-UCB1
index; the optimism bonus is added only in the arm-selection rule. The analysis compares the plug-in scores in \eqref{eq:recomputed_scores} with the oracle scores \(S^{\rm pop}_{j,r}
:=
s(Y_{j,r};q_j^{\rm low},q_j^{\rm high})
=
\max\{q_j^{\rm low}-Y_{j,r},Y_{j,r}-q_j^{\rm high}\}.\)
Since \(s(y;\ell,u)\) is Lipschitz in the endpoints, the plug-in scores differ from the oracle scores by at most the endpoint estimation error: 
\begin{equation}
\bigl|
\widetilde S^{(s)}_{j,r}
-
S^{\rm pop}_{j,r}
\bigr|
\le
\max\left\{
\bigl|\hat q^{\rm low}_{j,s-1}-q_j^{\rm low}\bigr|,
\bigl|\hat q^{\rm high}_{j,s-1}-q_j^{\rm high}\bigr|
\right\}
\le
\bigl|\hat q^{\rm low}_{j,s-1}-q_j^{\rm low}\bigr|
+
\bigl|\hat q^{\rm high}_{j,s-1}-q_j^{\rm high}\bigr|,
 r<s.
\label{eq:score_plugin_bound}
\end{equation}

\subsection{Adaptive Level Update}
\label{subsec:adaptive_level_update}

The calibration level is updated on the local time scale of each arm. After the interval \([L_{j,s},U_{j,s}]\) is formed and \(Y_{j,s}\) is observed, define the online score \(S_{j,s}^{\rm on}=s(Y_{j,s};\hat q^{\rm low}_{j,s-1},\hat q^{\rm high}_{j,s-1})\). The miss indicator is
\[
\mathrm{err}_{j,s}
=
\ind\{Y_{j,s}\notin[L_{j,s},U_{j,s}]\}
=
\ind\{S_{j,s}^{\rm on}>\widehat c^{\rm conf}_{j,s}\}.
\]
Starting from \(\alpha_{j,2}\in[\underline\alpha,\overline\alpha]\), ACP-UCB1 performs the projected update
\[
\alpha_{j,s+1}
=
\Pi_{\mathcal I_s^\alpha}
\left[
\alpha_{j,s}
+
\eta_{j,s}(\alpha-\mathrm{err}_{j,s})
\right],
\qquad
\mathcal I_s^\alpha
=
[\underline\alpha,\overline\alpha]\cap[\alpha-\lambda\psi_s,\alpha+\lambda\psi_s],
\]
where \(\psi_s=\sqrt{\log(s)/s}\). The update is adaptive through the realized coverage error, while the localized projection keeps the level within a shrinking neighborhood of \(\alpha\), which enforces
\[
|\alpha_{j,s+1}-\alpha|\le \lambda\sqrt{\frac{\log s}{s}}.
\]

\subsection{Arm Selection Rule}
\label{subsec:acp_ucb}

With these quantities at hand, ACP-UCB1 follows an optimism rule for the upper conformal endpoint. After an initialization phase in which each arm is pulled \(
s_0=s_{\rm loc}(n)
:=\left\lceil\frac{b^2\log n}{16\rho^2}\right\rceil\vee2
\) times, the learner selects
\[
I_t\in\arg\max_{j\in\mathcal A}
\{
U_{j,T_j(t-1)+1}
+
b\sqrt{\frac{\log t}{T_j(t-1)}}
\}.
\]
Here \(b>0\) is the exploration coefficient, and \(\rho=\min\{r_s,r_y/2\}\) is determined by the local regularity constants in Assumption~\ref{assum:density}.

\section{Theoretical Analysis}
\label{sec:dep}

We next give a finite-time guarantee for ACP-UCB1. The proof hints on a new analytical step on concentration: the index contains an empirical reward quantile and an empirical score quantile, both computed from the same arm-local sample. The assumptions below impose only local regularity near the reward and score quantiles that enter the endpoint.

\subsection{Local Regularity Assumptions}
\label{subsec:assumptions}

\begin{assumption}
\label{assum:density}
There exist constants \(r_y,r_s>0\), \(0<f_{\min}\le f_{\max}<\infty\), and
\(g_{\min}>0\), independent of \(j\), such that the following conditions hold for
each arm \(j\):

(a) The reward density satisfies \(f_{\min}\le f_j(x)\le f_{\max}\) on $\mathcal I_j^{\rm rew}
:=
\cup_{\beta\in[\underline\alpha,\overline\alpha]}
\left(
[q_j^{\rm low}-c_j(\beta)-r_y,\,
 q_j^{\rm low}-c_j(\beta)+r_y]
\cup
[q_j^{\rm high}+c_j(\beta)-r_y,\,
 q_j^{\rm high}+c_j(\beta)+r_y]
\right).$

(b) The score density \(g_j\) in \eqref{eq:gj_def} satisfies
\(g_j(z)\ge g_{\min}\) on
\[
\mathcal I_j^{\rm score}
:=
\cup_{\beta\in[\underline\alpha,\overline\alpha]}
[c_j(\beta)-r_s,\,c_j(\beta)+r_s].
\]
We write \(
h_{\min}:=\min\{f_{\min},g_{\min}\},
\qquad
\rho:=\min\{r_s,r_y/2\}.\)
\end{assumption}

Condition (a) provides local control of the reward quantiles, while condition (b) converts score distribution errors into score-quantile errors. No global parametric assumption is imposed on the reward distributions.

\begin{remark}
For Gaussian rewards with uniformly bounded standard deviations,
Assumption~\ref{assum:density} holds with constants independent of the arm.
The details are in Appendix~\ref{app:proof_normal_local_constants}.
\end{remark}

\subsection{Main Regret Guarantee}
The proof relies on two concentration ingredients, stated in Appendix~\ref{app:aux_concentration_facts}: local concentration of empirical reward quantiles and concentration of the recomputed score-quantile correction. Combining these ingredients with the standard UCB argument gives the following guarantee.

\begin{theorem}
\label{thm:main_thm}
Suppose every arm satisfies the i.i.d. reward model and
Assumption~\ref{assum:density}. Assume that the upper-tail-optimal arm
\(A\in\arg\max_{j\in\mathcal A}u_j^\star\) is unique. ACP-UCB1 uses the
recomputed scores \eqref{eq:recomputed_scores}, the localized projected update, and the index rule. Suppose that
\(0<\lambda\le b g_{\min}/4\)
 and for each \(j\ne A\), assume
\(r_y\ge \Delta_j^{\rm ACP}/4,
r_s\ge \Delta_j^{\rm ACP}/4.\) Set
\(b_0:=\frac{64}{h_{\min}}.\)
If \(b\ge b_0\), \(s_0=s_{\rm loc}(n)\), and \(n\ge Ks_0\), then for every
\(j\ne A\),
\begin{equation}
\E[T_j(n)]
\le
\frac{4b^2\log n}{(\Delta_j^{\rm ACP})^2}+B_j,
\label{eq:thm1_main}
\end{equation}
where \(B_j<\infty\) is independent of \(n\).

Consequently,
\[
R_n^{\rm ACP}(ACP-UCB1)
\le
\sum_{j\ne A}
\frac{4b^2\log n}{\Delta_j^{\rm ACP}}
+O(1).
\]
\end{theorem}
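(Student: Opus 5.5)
The plan follows the classical UCB1 argument of \citet{auer2002finite}, with the sample-mean concentration replaced by a concentration bound for the empirical endpoint \(U_{j,s}\) around \(u_j^\star\).

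\textbf{Step 1: deterministic control of the endpoint error.} For \(s\ge s_0\), define the good event \(\mathcal E_{j,s}\) on which
\[
|\hat q^{\rm low}_{j,s-1}-q_j^{\rm low}|\le \epsilon
\quad\text{and}\quad
|\hat q^{\rm high}_{j,s-1}-q_j^{\rm high}|\le \epsilon .
\]
Let \(\widehat G\) be the empirical distribution function of the oracle scores \(S^{\rm pop}_{j,r}\). By the Lipschitz bound \eqref{eq:score_plugin_bound}, the recomputed scores lie within \(2\epsilon\) of the oracle scores. Hence the recomputed score quantile is sandwiched between empirical oracle quantiles:
\[
\widehat G^{-1}(1-\alpha_{j,s})-2\epsilon
\;\le\; \widehat c^{\rm conf}_{j,s}
\;\le\; \widehat G^{-1}(1-\alpha_{j,s})+2\epsilon .
\]
Combining this with a DKW/Bernstein deviation for \(\widehat G\) at level \(\epsilon'\), and with the lower bound \(g_j\ge g_{\min}\) on \(\mathcal I_j^{\rm score}\), gives
\[
|\widehat G^{-1}(1-\alpha_{j,s})-c_j(\alpha_{j,s})|\le \epsilon'/g_{\min}.
\]
The projection keeps the level localized, \(|\alpha_{j,s}-\alpha|\le\lambda\psi_{s-1}\). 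Since \(c_j'(\beta)=-1/g_j\), this yields
\[
|c_j(\alpha_{j,s})|\le \lambda\psi_{s-1}/g_{\min},
\]
using \(c_j(\alpha)=0\). The reward-quantile step is analogous, using \(f_j\ge f_{\min}\) on \(\mathcal I^{\rm rew}_j\) and Hoeffding for \(\frac1{m}\sum\ind\{Y\le q\pm\epsilon\}\). Collecting the three pieces, with probability at least \(1-c\,e^{-2(s-1)h_{\min}^2\epsilon^2/C}\),
\[
|U_{j,s}-u_j^\star|\le C_1\epsilon+\lambda\psi_{s-1}/g_{\min}.
\]

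\textbf{Step 2: choice of \(\epsilon\).} Take \(\epsilon\asymp\frac{b}{h_{\min}}\sqrt{\log t/(s-1)}\), with constants arranged so that \(b\ge b_0=64/h_{\min}\) makes the total endpoint deviation at most \(\frac b2\sqrt{\log t/s}\) with failure probability \(O(t^{-4})\). The condition \(\lambda\le bg_{\min}/4\) is what allows the level-drift term \(\lambda\psi/g_{\min}\) to be absorbed into \(\frac b4\sqrt{\log t/s}\). The initialization length \(s_0=s_{\rm loc}(n)\) guarantees \(\epsilon\le\rho\), so the perturbations stay inside the regions where Assumption~\ref{assum:density} applies. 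The hypotheses \(r_y,r_s\ge\Delta_j^{\rm ACP}/4\) guarantee that the localization windows contain the relevant deviations up to the gap scale.

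\textbf{Step 3: the UCB decomposition.} Set \(\ell_j=\lceil 4b^2\log n/(\Delta_j^{\rm ACP})^2\rceil\). After \(T_j\ge\ell_j\) pulls, pulling \(j\) at round \(t\) requires one of three events:
\begin{itemize}
\item[(i)] the optimal arm's index falls below \(u_A^\star\);
\item[(ii)] arm \(j\)'s endpoint exceeds \(u_j^\star+\frac b2\sqrt{\log t/s}\);
\item[(iii)] \(\Delta_j^{\rm ACP}<2b\sqrt{\log t/s}\).
\end{itemize}
Event (iii) is impossible for \(s\ge\ell_j\). Union-bounding (i) and (ii) over \(s\le t\) and summing over \(t\) gives \(\sum_t t\cdot O(t^{-4})<\infty\). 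Adding the initialization pulls \(s_0\), which is itself \(O(\log n)\), yields
\[
\E[T_j(n)]\le \frac{4b^2\log n}{(\Delta_j^{\rm ACP})^2}+B_j .
\]
The \(s_0\) term is dominated by, or absorbed into, the leading term because \(\rho\ge\Delta_j^{\rm ACP}/8\). The regret bound then follows from the identity \(R_n^{\rm ACP}=\sum_{j\ne A}\Delta_j^{\rm ACP}\E[T_j(n)]\).

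\textbf{Main obstacle.} The hardest part is Step 1. The recomputed scores share random endpoints and are evaluated at a data-dependent level \(\alpha_{j,s}\), and that level depends on past errors and hence on the same sample. I would handle this by making the sandwich bound deterministic and uniform in \(\beta\): apply DKW to \(\widehat G\) uniformly over all \(z\), then invert on the whole interval \([\underline\alpha,\overline\alpha]\cap\mathcal I^\alpha_{s-1}\) at once. This avoids any conditioning on \(\alpha_{j,s}\), and the adaptivity of the bandit sampling is handled by the usual union bound over local times \(s\).
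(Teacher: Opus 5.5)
Your proposal is essentially the paper's proof: you use the same split of $U_{j,s+1}-u_j^\star$ into a reward-quantile error, a recomputed score-quantile error, and a level-drift term absorbed via $\lambda\le bg_{\min}/4$, then the three-event UCB1 decomposition, then absorption of $s_0$ through $\rho\ge\Delta_j^{\rm ACP}/8$; the score-quantile error is handled, as in Lemma~\ref{lem:score_quantile}, by DKW on the oracle scores, the plug-in perturbation \eqref{eq:score_plugin_bound}, and local inversion uniformly in $\beta$, which also covers the random level. Two bookkeeping fixes are needed: take the deviation scale $\epsilon\asymp b\sqrt{\log t/s}$ (the paper uses $a_{s,t}/4$ with $a_{s,t}=b\sqrt{\log t/s}$) rather than $\frac{b}{h_{\min}}\sqrt{\log t/(s-1)}$, so the failure probability is $t^{-c\,b^2h_{\min}^2}$, $b\ge 64/h_{\min}$ makes it $O(t^{-4})$, and $s\ge s_0$ gives $\epsilon\le\rho$; and set $\ell_j=\lceil 4b^2\log n/(\Delta_j^{\rm ACP})^2\rceil\vee s_0$ instead of adding $s_0$, since adding could double the leading constant.
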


The proof is deferred to Appendix~\ref{app:proof_main}.

\subsection{Comparison With Mean-Based UCB}
\label{subsec:metric_comparison}

The preceding theorem is stated for the ACP upper-tail objective. We now compare this objective with the classical mean-reward objective in a Gaussian model, where the distinction can be written explicitly. Specifically, we denote the mean-based Gaussian UCB benchmark by UCB1-NORMAL in the theoretical comparison, and refer to the same benchmark as UCB1 in the experiments. This comparison is not meant to claim that one policy dominates the other uniformly. Rather, it shows that the two policies are optimized for different population criteria, and that evaluating a policy under the other criterion may create a linear loss.

\subsubsection{A One-Parameter Gaussian Utility Path}

For Gaussian rewards, the ACP upper endpoint admits the representation
\(u_j^\star=\mu_j+z_{1-\alpha/2}\sigma_j\). 

Writing \(a=z_{1-\alpha/2}\), consider the utility path
\[
v_j(\kappa):=\mu_j+\kappa\sigma_j,
\qquad 0\le\kappa\le a.
\]
The endpoint \(\kappa=0\) recovers the mean objective, while \(\kappa=a\) recovers the ACP upper-tail objective. Thus \(\kappa\) continuously interpolates between mean seeking and upper-tail seeking behavior. 
Let
\(v_\star(\kappa):=\max_j v_j(\kappa),
    D_g(\kappa):=v_\star(\kappa)-v_g(\kappa)\)
and define the \(\kappa\)-utility pseudo-regret of a policy \(\pi\) by
\begin{equation}
    R_n^\kappa(\pi)
    :=
    \sum_{t=1}^n
    \E\bigl[v_\star(\kappa)-v_{I_t}(\kappa)\bigr].
    \label{eq:kappa_regret}
\end{equation}
Write \(
    A\in\arg\max_j v_j(a),
    M\in\arg\max_j v_j(0)\) for the ACP-optimal and mean-optimal arms, and assume uniqueness whenever the
notation is used.

\begin{proposition}
\label{prop:objective_transfer}
Fix a Gaussian bandit instance and an arm \(g\). If a policy \(\pi_g\) satisfies \(
\sum_{h\ne g}\E_{\pi_g}[T_h(n)]=O(\log n),\)
then
\[
\sup_{\kappa\in[0,a]}
\left|R_n^\kappa(\pi_g)-D_g(\kappa)n\right|
=O(\log n).
\]
Consequently, if ACP-UCB1 and UCB1-NORMAL satisfy the preceding condition with \(g=A\) and \(g=M\), respectively, then uniformly over \(\kappa\in[0,a]\),
\begin{align}
    R_n^\kappa(\mathrm{ACP\text{-}UCB1})
    &=D_A(\kappa)n+O(\log n),
    \label{eq:acp_kappa_leading}\\
    R_n^\kappa(\mathrm{UCB1\text{-}NORMAL})
    &=D_M(\kappa)n+O(\log n).
    \label{eq:mean_kappa_leading}
\end{align}
\end{proposition}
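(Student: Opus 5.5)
The plan is to write the $\kappa$-regret as a weighted sum of pull counts and separate the contribution of arm $g$ from that of the other arms. Since $v_\star(\kappa)-v_{I_t}(\kappa)=D_{I_t}(\kappa)$ and $\sum_h T_h(n)=n$, linearity of expectation gives
\[
R_n^\kappa(\pi_g)=\sum_{h}D_h(\kappa)\,\E_{\pi_g}[T_h(n)]
=D_g(\kappa)\Bigl(n-\sum_{h\ne g}\E_{\pi_g}[T_h(n)]\Bigr)+\sum_{h\ne g}D_h(\kappa)\,\E_{\pi_g}[T_h(n)].
\]
Subtracting $D_g(\kappa)n$ yields the exact identity
\[
R_n^\kappa(\pi_g)-D_g(\kappa)n=\sum_{h\ne g}\bigl(D_h(\kappa)-D_g(\kappa)\bigr)\E_{\pi_g}[T_h(n)].
\]

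The next step is to bound the coefficients $D_h(\kappa)-D_g(\kappa)=v_g(\kappa)-v_h(\kappa)=(\mu_g-\mu_h)+\kappa(\sigma_g-\sigma_h)$ uniformly in $\kappa$. For $\kappa\in[0,a]$ they are at most $C:=\max_{h}\bigl(|\mu_g-\mu_h|+a|\sigma_g-\sigma_h|\bigr)$ in absolute value. This constant depends only on the fixed instance and on $\alpha$, not on $\kappa$ or $n$. Hence
\[
\sup_{\kappa\in[0,a]}\bigl|R_n^\kappa(\pi_g)-D_g(\kappa)n\bigr|\le C\sum_{h\ne g}\E_{\pi_g}[T_h(n)]=O(\log n),
\]
by the hypothesis on $\pi_g$. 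Uniformity in $\kappa$ holds because the pull counts do not depend on $\kappa$; only the bounded, affine coefficients do.

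For the consequence, I apply the first part with $g=A$ for ACP-UCB1 and with $g=M$ for UCB1-NORMAL. For ACP-UCB1 the hypothesis follows from Theorem~\ref{thm:main_thm}. Summing \eqref{eq:thm1_main} over $j\ne A$ gives $O(\log n)$, and the theorem's assumptions hold for Gaussian arms with bounded variance by the remark following Assumption~\ref{assum:density}. For UCB1-NORMAL the corresponding $O(\log n)$ pull-count bound is the classical one, which the proposition takes as a hypothesis. The argument is elementary, so I expect no real obstacle. The only point that needs care is that the $O(\cdot)$ constant be $\kappa$-free, and the uniform bound on $|v_g-v_h|$ over the compact interval $[0,a]$ handles this.
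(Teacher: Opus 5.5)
Your proof is correct and follows the paper's argument: the same identity $R_n^\kappa(\pi_g)-D_g(\kappa)n=\sum_{h\ne g}\bigl(D_h(\kappa)-D_g(\kappa)\bigr)\E_{\pi_g}[T_h(n)]$, followed by a $\kappa$-free bound on the coefficients (the paper uses $2D_{\max}$ with $D_{\max}=\max_h\sup_{\kappa\in[0,a]}D_h(\kappa)$, while you bound $|v_g(\kappa)-v_h(\kappa)|$ directly, which serves the same purpose). Your side remark that Theorem~\ref{thm:main_thm} supplies the ACP-UCB1 hypothesis on any bounded-variance Gaussian instance overstates things, because the Gaussian remark is stated only for $r_s<\sigma_{\min}z_{\min}$, and that cannot coexist with the gap-radius requirement $r_s\ge\Delta_j^{\rm ACP}/4$ when $\Delta_j^{\rm ACP}\ge 4\sigma_{\min}z_{\min}$; this does not affect your proof, since the proposition takes the $O(\log n)$ pull-count condition as an explicit hypothesis.
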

The proof is given in Appendix~\ref{app:normal_comparison_proofs}.

\subsubsection{Metric Trade-Off}

We first consider the objective-mismatch regime \(A\ne M\). In this case, the mean-optimal arm has larger mean, \(\mu_M>\mu_A\), but the ACP-optimal arm compensates through a larger standard deviation. Let \(\delta_\mu=\mu_M-\mu_A\) and \(\delta_\sigma=\sigma_A-\sigma_M\). The condition that arm \(A\) is preferred at the ACP endpoint is \(0<\delta_\mu<a\delta_\sigma\), so \(\delta_\sigma>0\) and the two objectives switch preference at \(\kappa_0=\delta_\mu/\delta_\sigma\in(0,a)\).

Assume
\[
v_\star(\kappa)=\max\{v_A(\kappa),v_M(\kappa)\},
\qquad 0\le\kappa\le a.
\]
Then
\begin{equation}
    D_A(\kappa)=(\delta_\mu-\kappa\delta_\sigma)_+,
    \qquad
    D_M(\kappa)=(\kappa\delta_\sigma-\delta_\mu)_+ .
    \label{eq:two_arm_D_functions}
\end{equation}
Combining \eqref{eq:two_arm_D_functions} with
Proposition~\ref{prop:objective_transfer} gives the endpoint comparison:

At the ACP endpoint,
\begin{align}
    R_n^a(\mathrm{ACP\text{-}UCB1})
    &=O(\log n), R_n^a(\mathrm{UCB1\text{-}NORMAL})
    =(a\delta_\sigma-\delta_\mu)n+O(\log n).
\end{align}
At the mean endpoint,
\begin{align}
    R_n^0(\mathrm{ACP\text{-}UCB1})
    &=\delta_\mu n+O(\log n), R_n^0(\mathrm{UCB1\text{-}NORMAL})
    =O(\log n).
\end{align}

Thus the separation is not caused by a weaker concentration inequality. It is the deterministic cost of allocating most pulls to the arm favored by the other population criterion.

The same two deterministic functions describe aggregate comparisons over the
whole path.  From \eqref{eq:two_arm_D_functions},
\begin{equation}
\sup_{\kappa\in[0,a]}D_A(\kappa)=\delta_\mu,
    \qquad
\sup_{\kappa\in[0,a]}D_M(\kappa)=a\delta_\sigma-\delta_\mu.
    \label{eq:two_arm_suprema}
\end{equation}
Hence ACP-UCB1 has the smaller worst-case leading coefficient over \([0,a]\) if
and only if
\begin{equation}
    2\delta_\mu<a\delta_\sigma .
    \label{eq:minimax_kappa_condition}
\end{equation}
For a nominal \(95\%\) central prediction interval, \(\alpha=0.05\) and
\(a=z_{0.975}\approx1.96\). The condition becomes \(\sigma_A-\sigma_M >
\frac{2}{1.96}(\mu_M-\mu_A)
\approx 1.02(\mu_M-\mu_A).\)

\subsubsection{Logarithmic Constants Comparison}

We finally consider the common-optimizer regime, where the mean and ACP objectives select the same arm. In this case, both policies concentrate on the same target arm and both regrets are logarithmic under both metrics. The comparison is therefore reduced to the leading constants in the pull-count bounds.

Assume \(A\) is the unique maximizer of both \(v_j(0)\) and \(v_j(a)\).  For every
\(j\ne A\), define
\begin{equation}
    \Delta_j^\mu:=\mu_A-\mu_j,
    \qquad
    \Delta_j^{\rm ACP}:=(\mu_A+a\sigma_A)-(\mu_j+a\sigma_j).
    \label{eq:common_optimizer_gaps}
\end{equation}
That is, \( \Delta_j^{\rm ACP}
    =\Delta_j^\mu+a(\sigma_A-\sigma_j).\) Thus the two objectives share the optimizer, but they do not in general induce
the same gaps.

For ACP-UCB1, Theorem~\ref{thm:main_thm} gives, under the same
gap-radius condition,
\begin{equation}
\E[T_j^{\rm ACP}(n)]
\le
\gamma_j^{\rm ACP}\log n+B_j^{\rm ACP},
\qquad
\gamma_j^{\rm ACP}
:=
\frac{4b^2}{(\Delta_j^{\rm ACP})^2}.
\label{eq:acp_pull_envelope_common}
\end{equation}

For UCB1-NORMAL, Theorem~4 of \citet{auer2002finite} gives the aggregate mean
regret bound
\begin{equation}
R_n^0(\mathrm{UCB1\text{-}NORMAL})
\le
\sum_{j\ne A}
\left(
\frac{256\sigma_j^2}{\Delta_j^\mu}
+
8\Delta_j^\mu
\right)\log n
+
\left(1+\frac{\pi^2}{2}\right)
\sum_{j\ne A}\Delta_j^\mu .
\label{eq:auer_ucb1_normal_theorem4}
\end{equation}
Correspondingly, 
\begin{equation}
\E[T_j^{\rm UCB}(n)]
\le
\gamma_j^{\rm UCB}\log n+B_j^{\rm N},
\qquad
\gamma_j^{\rm UCB}
:=
\max\left\{
\frac{256\sigma_j^2}{(\Delta_j^\mu)^2},
8
\right\}.
\label{eq:ucb_normal_pull_envelope_common}
\end{equation}

Under the ACP endpoint, the two armwise leading terms are \(\Delta_j^{\rm ACP}\gamma_j^{\rm ACP}
    \text{and}
    \Delta_j^{\rm ACP}\gamma_j^{\rm UCB};\)
under the mean endpoint, they are \( \Delta_j^\mu\gamma_j^{\rm ACP}
    \text{and}
    \Delta_j^\mu\gamma_j^{\rm UCB}.\)

Since the endpoint gaps are positive, the same armwise inequality determines the comparison under both metrics: ACP-UCB1 has the smaller leading term for arm \(j\) precisely when \(\gamma_j^{\rm ACP}<\gamma_j^{\rm UCB}\). Equivalently, the ACP gap enlargement must be large enough to offset the larger constant introduced by estimating an upper conformal endpoint rather than a mean. That is,
\begin{equation}
\frac{4b^2}{(\Delta_j^{\rm ACP})^2}
<
\max\left\{
\frac{256\sigma_j^2}{(\Delta_j^\mu)^2},8
\right\}.
\label{eq:common_optimizer_exact_condition}
\end{equation}
Equivalently, with \(
r_j:=\frac{\Delta_j^{\rm ACP}}{\Delta_j^\mu},\)
this condition becomes
\(r_j > \frac{2b}
{\sqrt{\max\{256\sigma_j^2,\,8(\Delta_j^\mu)^2\}}}.\)
Thus, in the common-optimizer regime, both policies are logarithmic under both
metrics; the comparison is determined by whether the ACP gap enlargement
outweighs its endpoint-estimation constant.

\section{Numerical Experiments}
\label{sec:experiments}

We conduct numerical experiments to illustrate the metric-specific behavior predicted by the preceding analysis. Specifically, we first demonstrate the performance of ACP-UCB1 in comparison with UCB1 on a Gaussian instance where the mean-optimal and ACP-optimal arms are different. Moreover, we examine how the regret changes with respect to the nominal miscoverage level \(\alpha\). Furthermore, we conduct experiments beyond Gaussian rewards, including heavy-tailed and skewed reward families, to illustrate the robustness of the observed phenomenon. Details about the data generation and implementation are provided in Appendix~\ref{app:experiment_details}.

\subsection{Metric Comparison Results}

The results for the Gaussian instance are presented in Figure~\ref{fig:normal_cumulative}. The two panels evaluate the same trajectories under cumulative ACP-regret and cumulative mean-regret, respectively. ACP-UCB1 has smaller ACP-regret, while mean-based UCB1 has smaller mean-regret. This is consistent with the comparison in Subsection~\ref{subsec:metric_comparison}: when the two objectives select different arms, a policy optimized for one objective may incur a linear transfer cost under the other objective.

\begin{figure}[t]
    \centering
    \includegraphics[width=1\textwidth]{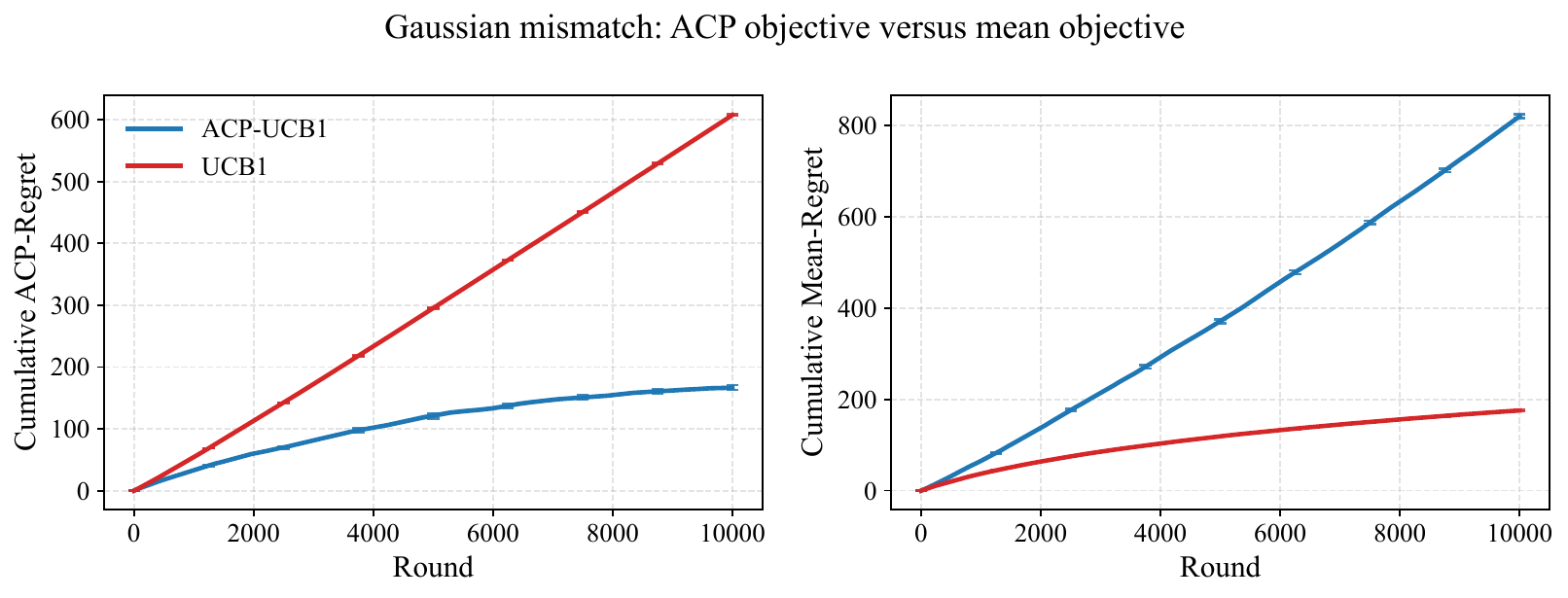}
    \caption{
   Metric comparison on a Gaussian objective-mismatch instance.
    }
    \label{fig:normal_cumulative}
\end{figure}

\subsection{Nominal-Level Dependency Results}

We next examine the dependence on the nominal level \(\alpha\). The results are shown in Figure~\ref{fig:alpha_sweep}. As \(\alpha\) decreases, the ACP objective moves farther into the upper tail and becomes more sensitive to reward variability. In this regime, ACP-UCB1 continues to have smaller final ACP-regret, while mean-based UCB1 continues to have smaller final mean-regret. The ACP-regret of mean-based UCB1 increases sharply for smaller \(\alpha\), reflecting the increasing mismatch between mean optimization and upper-tail optimization.

\begin{figure}[t]
    \centering
    \includegraphics[width=1\textwidth]{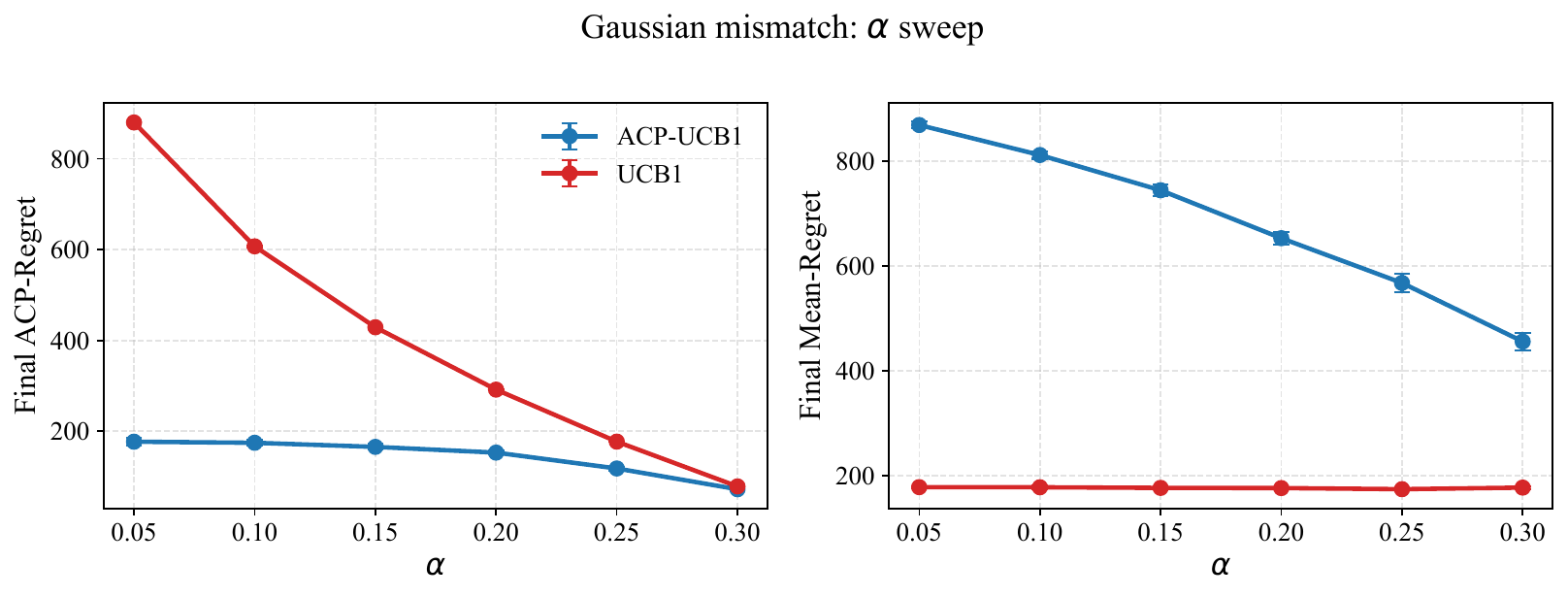}
    \caption{
    Final regret as the nominal level \(\alpha\) varies.}
    \label{fig:alpha_sweep}
\end{figure}

\subsection{Distributional Robustness Results}

Finally, we evaluate ACP-regret across Gaussian, Student-\(t\), and skewed Student-\(t\) reward families. The results are reported in Figure~\ref{fig:distribution_robustness}. Across all three settings, ACP-UCB1 has substantially smaller ACP-regret than mean-based UCB1. This suggests that the metric-specific behavior is not only a Gaussian artifact: when the upper-tail ranking differs from the mean ranking, a mean-based policy can accumulate large ACP-regret, while ACP-UCB1 learns the arm favored by the upper-tail criterion.

\begin{figure}[t]
    \centering
    \includegraphics[width= 1\textwidth]{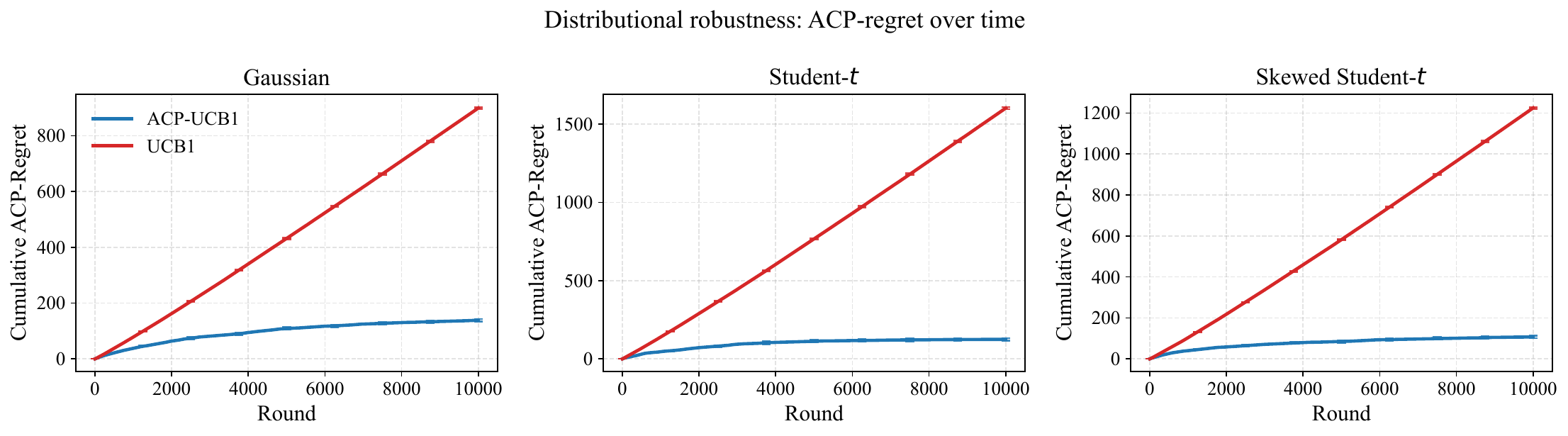}
    \caption{
    Distributional robustness under cumulative ACP-regret.
    }\vspace{-11pt}
    \label{fig:distribution_robustness}
\end{figure}

\section{Conclusion}
\label{sec:conclusion}

This paper introduced ACP-UCB1, a stochastic bandit policy for a conformal
objective induced by a score-calibrated predictive endpoint. The method combines
three ingredients: empirical central reward anchors, a recomputed score-quantile
correction, and a UCB exploration bonus. This construction treats the corrected
upper endpoint as the arm value.

The theoretical analysis shows that, under i.i.d. rewards and local density
regularity, ACP-UCB1 achieves logarithmic pseudo-regret for the conformal
upper-tail objective. The proof replaces empirical-mean concentration in the
standard UCB argument with concentration of empirical reward quantiles and
recomputed score-quantile corrections. The localized adaptive update keeps the
level-induced correction within the exploration scale, allowing the regret
analysis to remain centered on the nominal population endpoint.

The Gaussian comparison and numerical experiments illustrate the objective
dependence of bandit learning. When the mean-optimal arm and the ACP-optimal arm
coincide, the distinction between mean-based and ACP-based policies is reflected
in the logarithmic constants and gaps. When the two objectives select different
arms, a policy optimized for one metric can incur a linear leading term under
the other. The experiments confirm this behavior: ACP-UCB1 reduces regret under
the upper-tail objective, while mean-based UCB remains preferable under the
classical mean objective. Thus ACP-UCB1 is best understood not as a universal
replacement for mean-based bandit algorithms, but as a policy for settings in
which upper predictive performance is the relevant decision criterion.

\begin{ack}
\end{ack}

\bibliographystyle{plainnat}
{\small
\bibliography{references}
}

\newpage 
\appendix

\section{Technical appendices and supplementary material}

\subsection{Proofs for the ACP-UCB1 Regret Theorem}
\label{app:technical}

\subsubsection{Remark in Assumption}
\label{app:proof_normal_local_constants}
Assume that the reward distribution of arm \(j\) is
\(F_j=N(\mu_j,\sigma_j^2)\), with
\(0<\sigma_{\min}\le\sigma_j\le\sigma_{\max}<\infty\) uniformly over
\(j\).  

Let \(z_p:=\Phi^{-1}(p)\),
\[
z_{\min}:=z_{1-\overline\alpha/2},
\qquad
z_{\max}:=z_{1-\underline\alpha/2}.
\]  
If \(r_y>0\) and \(0<r_s<\sigma_{\min}z_{\min}\), then
Assumption~\ref{assum:density} holds with constants independent of \(j\).

One admissible choice is
\[
f_{\min}^{\mathcal N}
=\frac{1}{\sigma_{\max}}\phi\!\left(z_{\max}+\frac{r_y}{\sigma_{\min}}\right),
\qquad
f_{\max}^{\mathcal N}
=\frac{1}{\sigma_{\min}}\phi(0),
\]
\[
g_{\min}^{\mathcal N}
=\frac{2}{\sigma_{\max}}
\phi\!\left(z_{\max}+\frac{r_s}{\sigma_{\min}}\right),
\]
where \(\phi\) and \(\Phi\) denote the standard Gaussian density and CDF.
\begin{proof}
Fix an arm \(j\). Write \(\sigma=\sigma_j\), \(\mu=\mu_j\), and
\(z_p:=\Phi^{-1}(p)\). Set \(a:=z_{1-\alpha/2}\) and, for
\(\beta\in[\underline\alpha,\overline\alpha]\),
\[
z_\beta:=z_{1-\beta/2}.
\]

Since \(\beta\mapsto z_{1-\beta/2}\) is decreasing on
\([\underline\alpha,\overline\alpha]\),
\[
z_\beta\in[z_{\min},z_{\max}],
\qquad
z_{\min}:=z_{1-\overline\alpha/2},
\qquad
z_{\max}:=z_{1-\underline\alpha/2}.
\]
Moreover, \(\overline\alpha<1\) implies \(z_{\min}>0\).

Since \(F_j=N(\mu,\sigma^2)\),
\[
F_j^{-1}(p)=\mu+\sigma z_p.
\]
In particular,
\begin{equation}
q_j^{\rm low}=\mu-\sigma a,
\qquad
q_j^{\rm high}=\mu+\sigma a.
\label{eq:normal_nominal_anchors_appendix}
\end{equation}

For \(z\ge -\sigma a\), the population score CDF satisfies
\begin{align}
G_j(z)
&=F_j(q_j^{\rm high}+z)-F_j(q_j^{\rm low}-z)  \notag\\
&=\Phi\!\left(a+\frac{z}{\sigma}\right)
  -\Phi\!\left(-a-\frac{z}{\sigma}\right) \notag\\
&=2\Phi\!\left(a+\frac{z}{\sigma}\right)-1 .
\label{eq:normal_score_cdf_appendix}
\end{align}

Solving \(G_j(z)=1-\beta\) gives
\begin{equation}
c_j(\beta)=\sigma(z_\beta-a).
\label{eq:normal_score_quantile_appendix}
\end{equation}

Combining \eqref{eq:normal_nominal_anchors_appendix} and
\eqref{eq:normal_score_quantile_appendix} yields
\begin{equation}
q_j^{\rm low}-c_j(\beta)=\mu-\sigma z_\beta,
\qquad
q_j^{\rm high}+c_j(\beta)=\mu+\sigma z_\beta .
\label{eq:normal_beta_endpoints_appendix}
\end{equation}

Equation~\eqref{eq:normal_beta_endpoints_appendix} implies that, for every
\(x\in\mathcal I_j^{\rm rew}\), there exist
\(\varepsilon\in\{-1,+1\}\), \(\beta\in[\underline\alpha,\overline\alpha]\), and
\(|\delta|\le r_y\) such that
\begin{equation}
x=\mu+\varepsilon\sigma z_\beta+\delta .
\label{eq:normal_reward_set_representation}
\end{equation}

It follows that
\begin{equation}
\left|\frac{x-\mu}{\sigma}\right|
\le z_\beta+\frac{|\delta|}{\sigma}
\le z_{\max}+\frac{r_y}{\sigma_{\min}}.
\label{eq:reward_neighborhood_inclusion_appendix}
\end{equation}

Since \(f_j(x)=\sigma^{-1}\phi((x-\mu)/\sigma)\), \(\sigma\le\sigma_{\max}\),
and \(\phi\) is even and decreasing on \([0,\infty)\),
\eqref{eq:reward_neighborhood_inclusion_appendix} yields
\begin{equation}
f_j(x)
\ge
\frac1{\sigma_{\max}}
\phi\!\left(z_{\max}+\frac{r_y}{\sigma_{\min}}\right)
=:f_{\min}^{\mathcal N}.
\label{eq:reward_density_lower_appendix}
\end{equation}

Since \(\phi\) is maximized at zero,
\begin{equation}
f_j(x)
\le \frac1\sigma\phi(0)
\le \frac1{\sigma_{\min}}\phi(0)
=:f_{\max}^{\mathcal N}.
\label{eq:reward_density_upper_appendix}
\end{equation}

This verifies Assumption~\ref{assum:density}(a).

For Assumption~\ref{assum:density}(b), differentiating
\eqref{eq:normal_score_cdf_appendix} on the interior of the score support gives
\begin{equation}
g_j(z)=\frac{2}{\sigma}\phi\!\left(a+\frac{z}{\sigma}\right),
\qquad z>-\sigma a.
\label{eq:normal_score_density_appendix}
\end{equation}

Let \(z\in\mathcal I_j^{\rm score}\). By the definition of
\(\mathcal I_j^{\rm score}\) and \eqref{eq:normal_score_quantile_appendix},
there exist \(\beta\in[\underline\alpha,\overline\alpha]\) and
\(|\delta|\le r_s\) such that
\begin{equation}
z=c_j(\beta)+\delta=\sigma(z_\beta-a)+\delta .
\label{eq:normal_score_set_representation}
\end{equation}
Hence
\[
a+\frac{z}{\sigma}=z_\beta+\frac{\delta}{\sigma}.
\]
Since \(r_s<\sigma_{\min}z_{\min}\),
\begin{equation}
z_\beta+\frac{\delta}{\sigma}
\ge z_{\min}-\frac{r_s}{\sigma_{\min}}>0.
\label{eq:score_support_interior_appendix}
\end{equation}
Thus every point in \(\mathcal I_j^{\rm score}\) lies in the interior of the
score support, where \eqref{eq:normal_score_density_appendix} is valid.
Moreover,
\[
0<a+\frac{z}{\sigma}
\le z_{\max}+\frac{r_s}{\sigma_{\min}}.
\]
Using \eqref{eq:normal_score_density_appendix}, \(\sigma\le\sigma_{\max}\), and
the monotonicity of \(\phi\) on \([0,\infty)\), we obtain
\begin{equation}
g_j(z)
\ge
\frac{2}{\sigma_{\max}}
\phi\!\left(z_{\max}+\frac{r_s}{\sigma_{\min}}\right)
=:g_{\min}^{\mathcal N}.
\label{eq:score_density_lower_appendix}
\end{equation}
Thus Assumption~\ref{assum:density}(b) holds, with constants depending only on
\(\sigma_{\min},\sigma_{\max},\underline\alpha,\overline\alpha,r_y\), and
\(r_s\), and therefore uniform over arms.
\end{proof}

\subsubsection{Auxiliary Lemma}
\label{app:aux_concentration_facts}

\begin{lemma}
\label{lem:local_inverse}
Let \(G\) be continuous and let \(q\) be the unique \(\gamma\)-quantile of
\(G\), so that \(G(q)=\gamma\). Suppose that \(G\) has density at least
\(g_0>0\) on \([q-r,q+r]\). Then, for every \(0<\delta\le r\),
\[
G(q+\delta)\ge \gamma+g_0\delta,
\qquad
G(q-\delta)\le \gamma-g_0\delta .
\]
Consequently, if \(\widehat G\) is a distribution function and
\[
\sup_z|\widehat G(z)-G(z)|\le g_0\delta,
\]
then every lower empirical \(\gamma\)-quantile
\[
\widehat q:=\inf\{z:\widehat G(z)\ge\gamma\}
\]
satisfies
\[
|\widehat q-q|\le\delta .
\]
\end{lemma}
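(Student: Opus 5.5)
The plan has two steps: first the local growth inequalities, from the density lower bound and the fundamental theorem of calculus; then the empirical-quantile localization, by a two-sided sandwich using monotonicity of \(\widehat G\) and the definition of the lower quantile.

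For the growth inequalities, fix \(0<\delta\le r\). Since \(G\) has density at least \(g_0\) on \([q-r,q+r]\),
\[
G(q+\delta)-G(q)=\int_q^{q+\delta}G'(z)\,dz\ge g_0\delta .
\]
Using \(G(q)=\gamma\) gives \(G(q+\delta)\ge\gamma+g_0\delta\). The same argument on \([q-\delta,q]\) gives \(G(q)-G(q-\delta)\ge g_0\delta\), hence \(G(q-\delta)\le\gamma-g_0\delta\). Here ``density'' means \(G\) is absolutely continuous on that interval with derivative bounded below a.e. This is how \(g_j\) in \eqref{eq:gj_def} arises, so the integral representation is legitimate.

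For the localization, assume \(\sup_z|\widehat G(z)-G(z)|\le g_0\delta\).

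\emph{Upper bound.} At \(z=q+\delta\),
\[
\widehat G(q+\delta)\ge G(q+\delta)-g_0\delta\ge\gamma .
\]
So \(q+\delta\) belongs to \(\{z:\widehat G(z)\ge\gamma\}\), and therefore \(\widehat q\le q+\delta\).

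\emph{Lower bound.} For any \(z<q-\delta\), monotonicity of \(\widehat G\) and \(G\) gives
\[
\widehat G(z)\le G(z)+g_0\delta\le G(q-\delta)+g_0\delta\le\gamma .
\]
This yields only \(\widehat G(z)\le\gamma\), not a strict inequality, so it does not by itself exclude \(z\) from the set defining \(\widehat q\).

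The main obstacle is this boundary tie at level exactly \(\gamma\). It cannot be closed from the hypotheses as stated: \(\widehat G(z)=\gamma\) can occur for some \(z<q-\delta\), which would place \(\widehat q\) below \(q-\delta\). I would handle it in one of two ways:
\begin{itemize}
\item Assume a strict bound \(\sup_z|\widehat G-G|<g_0\delta\). Then \(\widehat G(z)<\gamma\) for all \(z<q-\delta\), so \(\widehat q\ge q-\delta\).
\item Keep the non-strict bound, and for \(\delta<r\) apply the growth inequality at \(q-\delta'\) with \(\delta<\delta'\le r\). This gives \(G(z)<\gamma-g_0\delta\) strictly for \(z<q-\delta\) whenever the density bound extends slightly beyond \(\delta\). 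That yields \(\widehat q\ge q-\delta\) in that case, and for \(\delta=r\) one takes a limit.
\end{itemize}
In the downstream applications the statement is used with high-probability DKW-type events, where replacing \(\le\) by \(<\) costs nothing. So I would note this and state the conclusion with the strict (or slightly enlarged) tolerance. Combining the two bounds gives \(|\widehat q-q|\le\delta\).
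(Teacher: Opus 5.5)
This is essentially the paper's proof: integrate the density bound to get the two growth inequalities, then sandwich $\widehat q$ using the definition of the lower quantile. The paper's own argument in fact silently assumes the strict bound $\sup_z|\widehat G(z)-G(z)|<g_0\delta$, for exactly the reason you identify, so your first bullet is precisely what the paper proves.

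Your second bullet is also valid for $\delta<r$. That is the only case used downstream, since Lemma~\ref{lem:score_quantile} applies the lemma with $\delta=\varepsilon/2\le r_s/2$. However, the claim that $\delta=r$ then follows by a limit is wrong, because at $\delta=r$ the non-strict statement is actually false. For a counterexample, let $G$ have slope exactly $g_0$ on $[q-r,q]$ and be flat on $[q-r-1,q-r]$. Define $\widehat G$ to equal $G$ below $q-r-1$, to equal $\gamma$ on $[q-r-1,q-r)$, and to equal $\min\{1,G+g_0r\}$ from $q-r$ on. This $\widehat G$ is a distribution function within $g_0r$ of $G$, yet $\widehat q=q-r-1$.
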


\begin{proof}
The density lower bound gives
\[
G(q+\delta)-G(q)
=\int_q^{q+\delta}g(u)\,du
\ge g_0\delta,
\qquad
G(q)-G(q-\delta)
=\int_{q-\delta}^{q}g(u)\,du
\ge g_0\delta .
\]
Hence
\[
G(q+\delta)\ge \gamma+g_0\delta,
\qquad
G(q-\delta)\le \gamma-g_0\delta .
\]

Let \(\varepsilon=\sup_z|\widehat G(z)-G(z)|\). If
\(\varepsilon<g_0\delta\), then
\[
\widehat G(q+\delta)
\ge G(q+\delta)-\varepsilon
> \gamma,
\]
so \(\widehat q\le q+\delta\). Similarly, we have
\(\widehat q\ge q-\delta\), which yields
\(|\widehat q-q|\le\delta\).
\end{proof}

\begin{lemma}
\label{lem:quantile}
Let \(X_1,\dots,X_m\) be i.i.d. with continuous CDF \(F\) and density at
least \(f_{\min}>0\) on \([q-r_y,q+r_y]\), where \(q:=F^{-1}(\tau)\).
Let \(\hat q_m\) be the empirical \(\tau\)-quantile.  Then for every
\(0<\varepsilon\le r_y\),
\[
\Pp\bigl(|\hat q_m-q|>\varepsilon\bigr)
\le
2\exp(-2m\varepsilon^2 f_{\min}^2).
\]
\end{lemma}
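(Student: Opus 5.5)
We need to prove Lemma \ref{lem:quantile}, a DKW-type bound for the empirical \(\tau\)-quantile. The plan is to reduce the event \(\{|\hat q_m-q|>\varepsilon\}\) to two one-sided deviations of binomial counts and apply Hoeffding's inequality to each. Lemma~\ref{lem:local_inverse} does most of the translation from quantile error to CDF error.

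First we invoke Lemma~\ref{lem:local_inverse} with \(G=F\), \(g_0=f_{\min}\), \(r=r_y\) and \(\delta=\varepsilon\le r_y\). Continuity of \(F\) together with the density lower bound near \(q\) makes \(q\) the unique \(\tau\)-quantile, so \(F(q)=\tau\). The lemma then gives
\[
F(q+\varepsilon)\ge\tau+f_{\min}\varepsilon,\qquad F(q-\varepsilon)\le\tau-f_{\min}\varepsilon .
\]

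Next we translate each tail of the quantile deviation into an event on the empirical CDF \(\widehat F_m(x)=\frac1m\sum_{r}\ind\{X_r\le x\}\).

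\emph{Upper tail.} By the definition of \(\hat q_m\) as an infimum, \(\hat q_m>q+\varepsilon\) forces \(\widehat F_m(q+\varepsilon)<\tau\). Combined with the first display, this implies
\[
\widehat F_m(q+\varepsilon)-F(q+\varepsilon)<-f_{\min}\varepsilon .
\]

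\emph{Lower tail.} If \(\hat q_m<q-\varepsilon\), some \(x<q-\varepsilon\) has \(\widehat F_m(x)\ge\tau\). Monotonicity then gives \(\widehat F_m(q-\varepsilon)\ge\tau\), so
\[
\widehat F_m(q-\varepsilon)-F(q-\varepsilon)\ge f_{\min}\varepsilon .
\]

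For each tail, \(m\widehat F_m(x)\) at the fixed point \(x=q\pm\varepsilon\) is a sum of i.i.d. Bernoulli\((F(x))\) variables. Hoeffding's inequality bounds each one-sided deviation of size \(f_{\min}\varepsilon\) by \(\exp(-2m f_{\min}^2\varepsilon^2)\). A union bound over the two tails yields the claimed \(2\exp(-2m\varepsilon^2f_{\min}^2)\).

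The only delicate point is the strict versus non-strict inequalities at the boundary \(\hat q_m=q\pm\varepsilon\). Because the event in the lemma is the strict \(|\hat q_m-q|>\varepsilon\), both reductions above go through cleanly. As an alternative that avoids that bookkeeping, one can apply the DKW inequality \(\Pp(\sup_x|\widehat F_m-F|>f_{\min}\varepsilon)\le 2e^{-2mf_{\min}^2\varepsilon^2}\) (Massart's constant) and then the second part of Lemma~\ref{lem:local_inverse} directly. I would present the Hoeffding route as primary, since it needs no constant-sharp DKW result.
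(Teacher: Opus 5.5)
Your proof is correct and follows essentially the same route as the paper. Both arguments reduce \(\{\hat q_m>q+\varepsilon\}\) and \(\{\hat q_m<q-\varepsilon\}\) to one-sided deviations of \(\widehat F_m\) at the fixed points \(q\pm\varepsilon\), get the margin \(f_{\min}\varepsilon\) from the density lower bound (you via Lemma~\ref{lem:local_inverse}, the paper by integrating the density directly), and finish with Hoeffding's inequality and a union bound.
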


\begin{proof}
Let \(\widehat F_m\) denote the empirical CDF of \(X_1,\ldots,X_m\). By
definition,
\[
\hat q_m=\inf\{z:\widehat F_m(z)\ge\tau\}.
\]
The event \(\{\hat q_m>q+\varepsilon\}\) implies
\(\widehat F_m(q+\varepsilon)<\tau\). Since \(F\) has density at least
\(f_{\min}\) on \([q-r_y,q+r_y]\) and \(0<\varepsilon\le r_y\),
\[
F(q+\varepsilon)-F(q)
=
\int_q^{q+\varepsilon} f(u)\,du
\ge
\varepsilon f_{\min}.
\]
Because \(F(q)=\tau\), we have
\[
F(q+\varepsilon)\ge \tau+\varepsilon f_{\min}.
\]
Thus
\[
\Pp(\hat q_m-q>\varepsilon)
\le
\Pp\!\left(
F(q+\varepsilon)-\widehat F_m(q+\varepsilon)
\ge
\varepsilon f_{\min}
\right).
\]
The random variables \(\ind\{X_r\le q+\varepsilon\}\) are independent Bernoulli
variables with mean \(F(q+\varepsilon)\), by Hoeffding's inequality \citep{hoeffding1963probability}.
\[
\Pp(\hat q_m-q>\varepsilon)
\le
\exp(-2m\varepsilon^2f_{\min}^2).
\]

For the lower tail, the event \(\{\hat q_m<q-\varepsilon\}\) implies
\(\widehat F_m(q-\varepsilon)\ge\tau\). Since
\[
F(q)-F(q-\varepsilon)
=
\int_{q-\varepsilon}^{q}f(u)\,du
\ge
\varepsilon f_{\min},
\]
we have \(F(q-\varepsilon)\le\tau-\varepsilon f_{\min}\). Hence
\[
\Pp(q-\hat q_m>\varepsilon)
\le
\Pp\!\left(
\widehat F_m(q-\varepsilon)-F(q-\varepsilon)
\ge
\varepsilon f_{\min}
\right)
\le
\exp(-2m\varepsilon^2f_{\min}^2).
\]
\end{proof}

The second ingredient transfers this bound to the conformal correction, accounting for the plug-in endpoints used to recompute the scores.

\begin{lemma}
\label{lem:score_quantile}
Under the i.i.d. reward model and Assumption~\ref{assum:density}, for every
\(0<\varepsilon\le\rho\), where \(\rho=\min\{r_s,r_y/2\}\), and every
\(m=s-1\ge1\), define \(\widehat c_{j,s}(\beta)
:=
\widehat Q_{s-1}
\left(
1-\beta;
\widetilde S^{(s)}_{j,1},\ldots,\widetilde S^{(s)}_{j,s-1}
\right),\)
\begin{equation}
\Pp\!\left(
\sup_{\beta\in[\underline\alpha,\overline\alpha]}
\bigl|\widehat c_{j,s}(\beta)-c_j(\beta)\bigr|
\ge\varepsilon
\right)
\le
2\exp\!\left(-\frac{m\varepsilon^2g_{\min}^2}{2}\right)
+
4\exp\!\left(-\frac{m\varepsilon^2f_{\min}^2}{8}\right).
\label{eq:score_quantile_concentration}
\end{equation}
In particular, the bound holds at the random level \(\beta=\alpha_{j,s}\).
\end{lemma}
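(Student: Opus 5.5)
We prove the bound by reducing the recomputed-score quantile to an oracle-score quantile plus a shift, and then applying Lemma~\ref{lem:local_inverse} uniformly in \(\beta\). Fix \(j\), \(s\), and \(m=s-1\), and let \(\delta:=\varepsilon/2\).

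\textbf{Step 1: a good event for the reward quantiles.} Let
\[
E_q:=\bigl\{|\hat q^{\rm low}_{j,m}-q_j^{\rm low}|\le\delta,\ |\hat q^{\rm high}_{j,m}-q_j^{\rm high}|\le\delta\bigr\}.
\]
Since \(c_j(\alpha)=0\), Assumption~\ref{assum:density}(a) gives \(f_j\ge f_{\min}\) on \([q_j^{\rm low}-r_y,q_j^{\rm low}+r_y]\) and on \([q_j^{\rm high}-r_y,q_j^{\rm high}+r_y]\). Also \(\delta\le\rho/2\le r_y\). Applying Lemma~\ref{lem:quantile} to each endpoint and taking a union bound gives
\[
\Pp(E_q^c)\le 4\exp(-2m\delta^2 f_{\min}^2)\le 4\exp(-m\varepsilon^2 f_{\min}^2/8).
\]
This is the second term of \eqref{eq:score_quantile_concentration}.

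\textbf{Step 2: sandwiching the recomputed scores.} On \(E_q\), \eqref{eq:score_plugin_bound} in its max form gives \(|\widetilde S^{(s)}_{j,r}-S^{\rm pop}_{j,r}|\le\delta\) for every \(r<s\). Let \(\widehat G_m\) be the empirical CDF of the oracle scores \(S^{\rm pop}_{j,1},\dots,S^{\rm pop}_{j,m}\), and \(\widetilde G_m\) that of the recomputed scores. Then
\[
\widehat G_m(z-\delta)\le\widetilde G_m(z)\le\widehat G_m(z+\delta)\quad\text{for all } z .
\]
Hence the lower \((1-\beta)\)-quantile of \(\widetilde G_m\) lies within \(\delta\) of the lower \((1-\beta)\)-quantile \(\widehat c^{\rm pop}(\beta)\) of \(\widehat G_m\), simultaneously for all \(\beta\). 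This deterministic step removes the dependence created by the shared plug-in endpoints. We never need concentration for the recomputed scores themselves, which are not i.i.d.

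\textbf{Step 3: uniform oracle quantiles.} The oracle scores are i.i.d. with CDF \(G_j\), so the DKW inequality with Massart's constant gives
\[
\Pp\bigl(\sup_z|\widehat G_m-G_j|>g_{\min}\delta\bigr)\le 2\exp(-2m g_{\min}^2\delta^2)=2\exp(-m\varepsilon^2g_{\min}^2/2).
\]
This is exactly the first term. For every \(\beta\in[\underline\alpha,\overline\alpha]\), Assumption~\ref{assum:density}(b) gives \(g_j\ge g_{\min}\) on \([c_j(\beta)-r_s,c_j(\beta)+r_s]\), and \(\delta\le r_s\). Lemma~\ref{lem:local_inverse}, applied with \(\gamma=1-\beta\), then yields \(|\widehat c^{\rm pop}(\beta)-c_j(\beta)|\le\delta\) for all \(\beta\) at once, because a single uniform CDF bound serves every level.

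\textbf{Combining.} By the triangle inequality,
\[
\sup_\beta|\widehat c_{j,s}(\beta)-c_j(\beta)|\le 2\delta=\varepsilon
\]
outside an event of probability at most the sum of the two bounds. Because the control is uniform in \(\beta\), it also holds at the data-dependent level \(\alpha_{j,s}\).

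\textbf{Main obstacle.} The delicate point is the strict versus non-strict inequality. Lemma~\ref{lem:local_inverse} needs a CDF error strictly below \(g_0\delta\), while DKW controls the event "\(>\)". Matching them may require working with \(\delta'\) slightly larger than \(\varepsilon/2\) and letting \(\delta'\downarrow\varepsilon/2\). Continuity of the exponential bounds then recovers the stated "\(\ge\varepsilon\)" form.
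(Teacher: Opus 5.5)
Your proof is correct and follows essentially the same route as the paper: you control the oracle-score empirical quantiles uniformly in $\beta$ via DKW plus Lemma~\ref{lem:local_inverse}, compare recomputed-score and oracle-score quantiles deterministically through the anchor error, and bound the two reward anchors with Lemma~\ref{lem:quantile}. The differences are refinements only. You use the max form of \eqref{eq:score_plugin_bound}, so each anchor needs accuracy $\varepsilon/2$ rather than the paper's $\varepsilon/4$, giving a sharper exponent that you then relax. You prove the quantile-perturbation step with the CDF sandwich that the paper only asserts. You also handle the strict-versus-nonstrict boundary issue that the paper glosses over; its proof actually concludes with $>\varepsilon$ rather than $\ge\varepsilon$.
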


\begin{proof}
Fix an arm \(j\) and local time \(s\), and set \(m=s-1\). For
\(\beta\in[\underline\alpha,\overline\alpha]\), define the oracle-score
empirical correction
\[
\widehat c^{\rm pop}_{j,s}(\beta)
:=
\widehat Q_m
\left(
1-\beta;
S^{\rm pop}_{j,1},\ldots,S^{\rm pop}_{j,m}
\right),
\]
and let
\[
\widehat G_m^{\rm pop}(z)
:=
\frac1m\sum_{r=1}^m
\ind\{S^{\rm pop}_{j,r}\le z\}
\]
be the empirical CDF of the oracle scores. By the
Dvoretzky--Kiefer--Wolfowitz inequality with Massart's constant
\citep{dvoretzky1956asymptotic,massart1990tight},
\begin{equation}
\Pp\!\left(
\sup_z|\widehat G_m^{\rm pop}(z)-G_j(z)|>u
\right)
\le
2\exp(-2mu^2).
\label{eq:dkw_oracle_score_app}
\end{equation}
Taking \(u=g_{\min}\varepsilon/2\), and using
Lemma~\ref{lem:local_inverse}, gives
\begin{equation}
\Pp\!\left(
\sup_{\beta\in[\underline\alpha,\overline\alpha]}
\left|
\widehat c^{\rm pop}_{j,s}(\beta)-c_j(\beta)
\right|
>
\varepsilon/2
\right)
\le
2\exp\!\left(-\frac{m\varepsilon^2g_{\min}^2}{2}\right).
\label{eq:oracle_correction_bound_app}
\end{equation}

Next define
\[
E_{j,s}
:=
\left|\hat q^{\rm low}_{j,s-1}-q_j^{\rm low}\right|
+
\left|\hat q^{\rm high}_{j,s-1}-q_j^{\rm high}\right|.
\]
By \eqref{eq:score_plugin_bound},
\[
\max_{1\le r\le m}
\left|
\widetilde S^{(s)}_{j,r}-S^{\rm pop}_{j,r}
\right|
\le
E_{j,s}.
\]

Thus we have
\[
\sup_{\beta\in[\underline\alpha,\overline\alpha]}
\left|
\widehat c_{j,s}(\beta)-\widehat c^{\rm pop}_{j,s}(\beta)
\right|
\le
E_{j,s}.
\]
Consequently,
\[
\left\{
\sup_{\beta\in[\underline\alpha,\overline\alpha]}
\left|
\widehat c_{j,s}(\beta)-c_j(\beta)
\right|
>
\varepsilon
\right\}
\subseteq
\mathcal E_{1}\cup\mathcal E_{2},
\]
where
\[
\mathcal E_{1}
:=
\left\{
\sup_{\beta\in[\underline\alpha,\overline\alpha]}
\left|
\widehat c^{\rm pop}_{j,s}(\beta)-c_j(\beta)
\right|
>
\varepsilon/2
\right\},
\qquad
\mathcal E_{2}:=\{E_{j,s}>\varepsilon/2\}.
\]

The event \(\mathcal E_2\) implies that one of the two reward-quantile errors
exceeds \(\varepsilon/4\). Since
\(\varepsilon\le\rho\le r_y/2\), Lemma~\ref{lem:quantile} gives
\begin{equation}
\Pp(\mathcal E_2)
\le
4\exp\!\left(-\frac{m\varepsilon^2f_{\min}^2}{8}\right).
\label{eq:plugin_anchor_bound_app}
\end{equation}
Combining \eqref{eq:oracle_correction_bound_app} and
\eqref{eq:plugin_anchor_bound_app} yields
\[
\Pp\!\left(
\sup_{\beta\in[\underline\alpha,\overline\alpha]}
\left|
\widehat c_{j,s}(\beta)-c_j(\beta)
\right|
>
\varepsilon
\right)
\le
2\exp\!\left(-\frac{m\varepsilon^2g_{\min}^2}{2}\right)
+
4\exp\!\left(-\frac{m\varepsilon^2f_{\min}^2}{8}\right).
\]
\end{proof}

\subsubsection{Proof of Theorem 1}
\label{app:proof_main}

\begin{proof}
Fix a suboptimal arm \(j\ne A\), and set
\[
\ell_j
:=
\left\lceil
\frac{4b^2\log n}{(\Delta_j^{\rm ACP})^2}
\right\rceil
\vee s_0 .
\]
For a post-initialization round \(t\), let
\[
s:=T_j(t-1),
\qquad
s_*:=T_A(t-1).
\]
If ACP-UCB1 selects arm \(j\) at round \(t\) and \(s\ge \ell_j\), then at least
one of the following events occurs:
\[
E_1(s_*,t)
:=
\left\{
U_{A,s_*+1}
\le
u_A^\star
-
b\sqrt{\frac{\log t}{s_*}}
\right\},
\]
\[
E_2(s,t)
:=
\left\{
U_{j,s+1}
\ge
u_j^\star
+
b\sqrt{\frac{\log t}{s}}
\right\},
\]
or
\[
E_3(s,t)
:=
\left\{
\Delta_j^{\rm ACP}
<
2b\sqrt{\frac{\log t}{s}}
\right\}.
\]
Since \(s\ge\ell_j\) and \(t\le n\), the event \(E_3(s,t)\) is
excluded. Therefore,
\begin{equation}
\E[T_j(n)]
\le
\ell_j
+
\sum_{t=1}^n
\sum_{s_*=s_0}^{t-1}
\sum_{s=\ell_j}^{t-1}
\left\{
\Pp(E_1(s_*,t))+\Pp(E_2(s,t))
\right\}.
\label{eq:auer_triple_sum_acp}
\end{equation}

Set
\[
a_{s,t}:=b\sqrt{\frac{\log t}{s}}.
\]

Using \(c_j(\alpha)=0\) and \(u_j^\star=q_j^{\rm high}\), decompose
\[
U_{j,s+1}-u_j^\star
=
R_{j,s}+Q_{j,s}+A_{j,s},
\]
where
\[
R_{j,s}
:=
\hat q^{\rm high}_{j,s}-q_j^{\rm high},
\]
\[
Q_{j,s}
:=
\widehat c_{j,s+1}(\alpha_{j,s+1})
-
c_j(\alpha_{j,s+1}),
\]
and
\[
A_{j,s}
:=
c_j(\alpha_{j,s+1})-c_j(\alpha).
\]

The localized projection gives
\[
|\alpha_{j,s+1}-\alpha|
\le
\lambda\sqrt{\frac{\log s}{s}},
\qquad s\ge2.
\]
By Lemma~\ref{lem:local_inverse} applied to \(G_j\),
\[
|c_j(\beta)-c_j(\alpha)|\le |\beta-\alpha|/g_{\min},
\qquad
\beta\in[\underline\alpha,\overline\alpha].
\] 

Hence
\[
|A_{j,s}|
\le
\frac{|\alpha_{j,s+1}-\alpha|}{g_{\min}}
\le
\frac{\lambda}{g_{\min}}
\sqrt{\frac{\log s}{s}}
\le
\frac{b}{4}
\sqrt{\frac{\log t}{s}}
=
\frac{a_{s,t}}{4}.
\]

Thus
\[
E_2(s,t)
\subseteq
\{|R_{j,s}|\ge a_{s,t}/4\}
\cup
\{|Q_{j,s}|\ge a_{s,t}/4\}.
\]
must hold. Lemma~\ref{lem:quantile}, applied to the empirical upper reward
quantile, gives
\[
\Pp(|R_{j,s}|\ge a_{s,t}/4)
\le
2\exp\!\left(
-\frac{b^2f_{\min}^2}{8}\log t
\right)
\le
2t^{-4},
\]
where the last inequality follows from \(b\ge b_0=64/h_{\min}\). Lemma
\ref{lem:score_quantile}, applied at local time \(s+1\), gives
\[
\Pp(|Q_{j,s}|\ge a_{s,t}/4)
\le
2\exp\!\left(
-\frac{b^2g_{\min}^2}{32}\log t
\right)
+
4\exp\!\left(
-\frac{b^2f_{\min}^2}{128}\log t
\right)
\le
6t^{-4},
\]
again by \(b\ge64/h_{\min}\). Therefore,
\begin{equation}
\Pp(E_2(s,t))\le C_Et^{-4}
\label{eq:E2_bound_closed}
\end{equation}
for a finite numerical constant \(C_E\). The same argument gives
\[
\Pp(E_1(s_*,t))\le C_Et^{-4}.
\]

Substituting these bounds into \eqref{eq:auer_triple_sum_acp},
\[
\sum_{t=1}^{\infty}
\sum_{s_*=s_0}^{t-1}
\sum_{s=\ell_j}^{t-1}
t^{-4}
\le
\sum_{t=1}^{\infty}t^2t^{-4}
<\infty .
\]
The triple sum is absorbed into a finite constant \(B_j\), independent of \(n\).
Indeed, \(r_s\ge\Delta_j^{\rm ACP}/4\) and \(r_y/2\ge\Delta_j^{\rm ACP}/8\), so
\(\rho\ge\Delta_j^{\rm ACP}/8\).
Therefore
\[
s_0
=
\left\lceil
\frac{b^2\log n}{16\rho^2}
\right\rceil\vee2
\le
\frac{4b^2\log n}{(\Delta_j^{\rm ACP})^2}+O(1).
\]
Since
\[
\ell_j
=
\left\lceil
\frac{4b^2\log n}{(\Delta_j^{\rm ACP})^2}
\right\rceil
\vee s_0,
\]
we obtain
\[
\E[T_j(n)]
\le
\frac{4b^2\log n}{\Delta_j^2}+B_j.
\]
\end{proof}

\subsubsection{Proofs for the Gaussian Comparison}
\label{app:normal_comparison_proofs}

\begin{proof}[Proof of Proposition~\ref{prop:objective_transfer}]
Fix \(\kappa\in[0,a]\). Expanding the regret by arms gives
\[
R_n^\kappa(\pi_g)
=
\sum_h D_h(\kappa)\E_{\pi_g}[T_h(n)].
\]
Since \(\sum_h\E_{\pi_g}[T_h(n)]=n\), we may write
\[
D_g(\kappa)n
=
D_g(\kappa)\E_{\pi_g}[T_g(n)]
+
D_g(\kappa)\sum_{h\ne g}\E_{\pi_g}[T_h(n)].
\]
Subtracting,
\[
R_n^\kappa(\pi_g)-D_g(\kappa)n
=
\sum_{h\ne g}
\bigl(D_h(\kappa)-D_g(\kappa)\bigr)
\E_{\pi_g}[T_h(n)].
\]
Let
\[
D_{\max}:=
\max_h\sup_{\kappa\in[0,a]}D_h(\kappa).
\]
For a fixed finite Gaussian instance, \(D_{\max}<\infty\). Hence
\[
\sup_{\kappa\in[0,a]}
\left|
R_n^\kappa(\pi_g)-D_g(\kappa)n
\right|
\le
2D_{\max}
\sum_{h\ne g}\E_{\pi_g}[T_h(n)].
\]
Taking \(g=A\) and \(g=M\) gives
\eqref{eq:acp_kappa_leading} and \eqref{eq:mean_kappa_leading}.
\end{proof}

\begin{proof}[Derivation of \eqref{eq:two_arm_D_functions}--\eqref{eq:two_arm_suprema}]
Under the two-arm envelope condition,
\[
v_\star(\kappa)=\max\{v_A(\kappa),v_M(\kappa)\}.
\]
Moreover,
\[
v_M(\kappa)-v_A(\kappa)
=
(\mu_M-\mu_A)+\kappa(\sigma_M-\sigma_A)
=
\delta_\mu-\kappa\delta_\sigma .
\]
Therefore,
\[
D_A(\kappa)
=
(v_M(\kappa)-v_A(\kappa))_+
=
(\delta_\mu-\kappa\delta_\sigma)_+,
\]
and
\[
D_M(\kappa)
=
(v_A(\kappa)-v_M(\kappa))_+
=
(\kappa\delta_\sigma-\delta_\mu)_+.
\]
This proves \eqref{eq:two_arm_D_functions}.

Since \(\kappa_0=\delta_\mu/\delta_\sigma\in(0,a)\), the maximum of \(D_A\) on
\([0,a]\) is attained at \(\kappa=0\), and the maximum of \(D_M\) is attained at
\(\kappa=a\). Hence
\[
\sup_{\kappa\in[0,a]}D_A(\kappa)=\delta_\mu,
\qquad
\sup_{\kappa\in[0,a]}D_M(\kappa)=a\delta_\sigma-\delta_\mu,
\]
which proves \eqref{eq:two_arm_suprema}. It follows immediately that
\[
\sup_{\kappa\in[0,a]}D_A(\kappa)
<
\sup_{\kappa\in[0,a]}D_M(\kappa)
\quad\Longleftrightarrow\quad
2\delta_\mu<a\delta_\sigma .
\]

More generally,
\[
\frac1a\int_0^aD_A(\kappa)\,d\kappa
=
\frac1a\int_0^{\kappa_0}
(\delta_\mu-\kappa\delta_\sigma)\,d\kappa
=
\frac{\delta_\mu^2}{2a\delta_\sigma},
\]
and
\[
\frac1a\int_0^aD_M(\kappa)\,d\kappa
=
\frac1a\int_{\kappa_0}^{a}
(\kappa\delta_\sigma-\delta_\mu)\,d\kappa
=
\frac{(a\delta_\sigma-\delta_\mu)^2}{2a\delta_\sigma}.
\]
\end{proof}

\subsection{Details on Numerical Experiments in Section~\ref{sec:experiments}}
\label{app:experiment_details}

We report the experimental details for Section~\ref{sec:experiments}, including implementation details and data generation.

\paragraph{Algorithms and implementation}
All experiments are run with horizon \(n=10{,}000\). Unless otherwise stated, the nominal level is \(\alpha=0.1\), so the ACP target is the \(0.95\)-quantile of the reward distribution. ACP-UCB1 uses exploration coefficient \(b=1\), adaptive step size \(\eta=0.01\), two initial round-robin pulls per arm, projection interval \([10^{-4},0.5]\), and localization parameter \(\lambda=1\). The mean-based UCB1 benchmark uses the index
\[
\hat\mu_j+\sqrt{\frac{2\log t}{T_j(t-1)}}.
\]
For cumulative regret curves, we average over 60 Monte Carlo replications. For the nominal-level sweep and distributional robustness experiments, we average over 30 Monte Carlo replications.

\paragraph{Data generation}
For the Gaussian objective-mismatch experiment, the three arms are
\[
N(0.10,0.05^2),\qquad N(0.00,0.15^2),\qquad N(0.04,0.08^2).
\]
At \(\alpha=0.1\), their ACP values are approximately \(0.182\), \(0.247\), and \(0.172\), respectively. Thus the first arm is mean-optimal, while the second arm is ACP-optimal.

For the nominal-level sweep, we use the same Gaussian instance and vary
\[
    \alpha\in\{0.30,0.25,0.20,0.15,0.10,0.05\}.
\]
For the distributional robustness experiment, all three distribution families
use the same location-scale pattern
\[
    (\ell_1,\ell_2,\ell_3)=(0.08,0.00,0.03),\qquad
    (s_1,s_2,s_3)=(0.05,0.16,0.08).
\]
The Gaussian case uses \(Y_j\sim N(\ell_j,s_j^2)\). The Student-\(t\) case uses
\(Y_j=\ell_j+s_j T_{\nu_j}\), where \(T_{\nu_j}\) is a standard Student-\(t\)
random variable and
\[
    (\nu_1,\nu_2,\nu_3)=(3,3,5).
\]
The skewed Student-\(t\) case uses
\(Y_j=\ell_j+s_j Z_{\nu_j,\lambda_j}\), where \(Z_{\nu_j,\lambda_j}\) denotes
the standardized skewed Student-\(t\) variable used in the simulation code, with
\[
    (\nu_1,\nu_2,\nu_3)=(5,5,5),\qquad
    (\lambda_1,\lambda_2,\lambda_3)=(-0.2,0.6,0.3).
\]

The full reference implementation,
including the configuration files and random seeds used to generate every figure in this section, is provided as an anonymized release at \url{https://anonymous.4open.science/r/Conformal-Style-Quantile-Analyses-for-Stochastic-Bandits-0703/}. And all experiments were run on a workstation with an AMD Ryzen Threadripper 7960X CPU (24 cores, 48 threads) and 384 GB of RAM; no GPU is required. The full reproducibility script completes in approximately 7 minutes.

\end{document}